\documentclass[sigconf,nonacm]{acmart}

\setcopyright{none}
\renewcommand\footnotetextcopyrightpermission[1]{}

\usepackage{algorithm}
\usepackage{algorithmic}
\usepackage{colortbl}
\usepackage{tabularx}
\usepackage{newfloat}
\usepackage{listings}
\usepackage{booktabs}

\newcommand{\Ind}{\mathbb{I}}
\newtheorem{theorem}{Theorem}
\newcommand{\method}{\textsc{PROVE-Rec}}

\newcommand{\sg}{\operatorname{sg}}
\newcommand{\TopK}{\operatorname{TopK}}

\theoremstyle{definition}

\theoremstyle{remark}

\title[Reasoning with Evidence, Not Merely Rationales: Verifiable Preference Proofs for LLM-Based Recommendation]{Reasoning with Evidence, Not Merely Rationales:\\
Verifiable Preference Proofs for LLM-Based Recommendation}

\author{Yu Hou}
\affiliation{%
  \institution{School of Mathematics and Computing (Computational Science and Engineering), Yonsei University}
  \city{Seoul}
  \country{Republic of Korea}}
\email{houyu@yonsei.ac.kr}

\author{Nathaniel Kang}
\affiliation{%
  \institution{School of Computer Science and Engineering, Kyungpook National University}
  \city{Daegu}
  \country{Republic of Korea}}
\email{natekang@knu.ac.kr}

\author{Pengkai Wang}
\affiliation{%
  \institution{Department of Electrical and Computer Engineering, Sungkyunkwan University}
  \city{Suwon}
  \country{Republic of Korea}}
\email{wangpengkai@g.skku.edu}

\author{Hua Li}
\authornote{Corresponding author.}
\affiliation{%
  \institution{Department of Industrial Engineering, Yonsei University}
  \city{Seoul}
  \country{Republic of Korea}}
\email{li_hua611@yonsei.ac.kr}

\begin{document}

\begin{abstract}
Large language models (LLMs) can infer user preferences from interaction histories and reviews, yet the rationales they generate may not reflect the information actually used for recommendation. A preference claim may be weakly supported by its selected evidence, or may have little effect on the final ranking. We refer to these two failures as the \emph{grounding--influence gap}. We introduce \method{}, a general framework for verifiable preference reasoning in LLM-based recommendation. Pass A converts the complete pre-target history into a compact preference proof consisting of positive and avoidance claims linked to selected evidence entries. Pass B predicts the next item using only the proof and its selected evidence, preventing the recommender from bypassing the reasoning path. To verify \emph{evidence-to-proof grounding}, we compare the effect of masking selected evidence with masking a comparable control entry. To verify \emph{proof-to-recommendation influence}, we remove a preference claim and measure the resulting decrease in the target item’s ranking margin. A ranking-preservation objective further retains useful information from the complete history. Comprehensive experiments on wide-ranging real-world datasets demonstrate that \method{} consistently outperforms strong sequential, generative, and LLM-enhanced baselines, with improvements of up to 7.45\%. Controlled ablations confirm the effectiveness of the two-pass architecture and verification objectives. Moreover, \method{} produces claims that are more strongly grounded in historical evidence and more influential to recommendation while preserving ranking quality.
\end{abstract}

\keywords{LLM-based Recommendation, Preference Reasoning, Verifiable Reasoning, Evidence Grounding, Decision Influence, Explainable Recommendation}

\maketitle

\section{Introduction}
\label{sec:introduction}


LLM-based recommendation is moving beyond item-pattern matching toward richer preference modeling and reasoning \citep{geng2022p5,bao2023tallrec,liao2024llara}. Modern systems can combine collaborative histories with language knowledge, item text, and user reviews before ranking or generating items \citep{zheng2024lcrec,wei2024llmrec,kim2024large,liu2025onerecthink,he2026sidreasoner}. Recent studies further explore rationale-enhanced user modeling, unified LLMs for multiple recommendation tasks, and reinforcement-learning-based optimization of recommendation behavior \citep{wang2024rdrec,jiang2025universal,ding2026riser}. This capability is appealing because recommendation is not only a question of \emph{what} a user consumed; it is also a question of \emph{which aspects} the user valued or rejected.

Historical reviews provide unusually direct evidence for this distinction \citep{deyoung2020eraser, paul2024making}. As shown in Figure \ref{fig:motivation}, consider a user with three previous audio-product interactions: \textbf{E1}, a five-star headphone review stating ``Very comfortable for long listening''; \textbf{E2}, a two-star earbud review stating ``The battery dies too quickly''; and \textbf{E3}, a five-star headset review praising microphone clarity. The next item is a comfortable, long-battery over-ear headset. An item-only trajectory can reveal that the user often consumes audio products. The reviews further suggest a positive preference for comfort and an avoidance preference for short battery life.


\begin{figure}[t]
  \centering
  \includegraphics[width=\columnwidth]{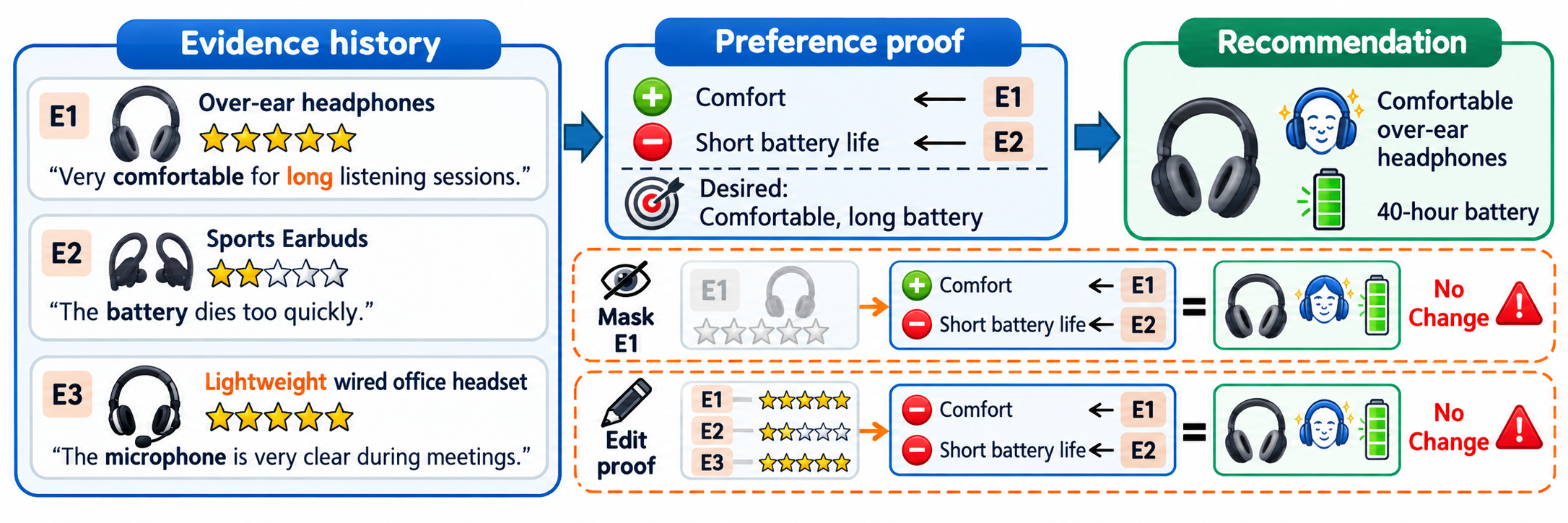}
  \Description{A single-column motivation diagram showing evidence history, preference-proof claims, and recommendation changes under evidence masking and preference editing.}
  \caption{\textbf{The grounding--influence gap through one running example.} \textbf{E1} and \textbf{E2} are selected by the preference proof, while \textbf{E3} serves as a comparable control entry. A rationale may appear plausible even if masking \textbf{E1} does not weaken the comfort claim, or if flipping the comfort-related preference does not reduce the target headset’s ranking advantage.}
  \label{fig:motivation}
\end{figure}

Plausibility, however, leaves two questions unanswered. First, does \textbf{E1} actually support the inferred preference for comfort, or is the evidence link merely decorative? Second, does the comfort claim affect the target ranking, or can the recommender ignore the proof and rely on category transitions, item popularity, or other shortcuts? As illustrated in Figure~\ref{fig:motivation}, the former failure occurs when masking \textbf{E1} barely changes the likelihood of the comfort claim, whereas the latter occurs when changing comfort from a positive preference to an avoidance preference leaves the target item's ranking margin nearly unchanged. Existing reasoning-enhanced recommenders primarily optimize recommendation accuracy, rationale likelihood, or trajectory-level reward \citep{liu2025onerecthink,he2026sidreasoner,zheng2026r2rank}. These objectives do not verify whether the stated evidence supports the preference or whether the preference affects the recommendation. Other LLM recommenders incorporate collaborative signals or reviews to improve recommendation quality \citep{zheng2024lcrec,zhang2026ragr}, while rationale-faithfulness research examines whether explanations reflect the evidence behind model predictions \citep{deyoung2020eraser}. However, these directions do not jointly verify both dependencies in LLM-based recommendation.

We use the term \textbf{grounding–influence gap} to describe two distinct operational dependencies. \textit{Evidence-to-proof grounding} asks whether the generation of a preference claim is selectively dependent on the historical evidence linked to that claim, relative to a comparable control entry. \textit{Proof-to-recommendation influence} asks whether the current ranking is locally sensitive to the claim when the selected evidence is held fixed. These two properties provide complementary views of the model’s reasoning behavior, but they do not define preference correctness. A semantically valid preference may be redundant with other claims or irrelevant to a particular target and therefore exhibit little measurable influence in that recommendation.

Our method is built on two complementary ideas.

\noindent\textbf{(\underline{Idea 1}): make \textit{evidence-to-proof grounding} explicit.}
Instead of producing unrestricted chain-of-thought text, the LLM generates a structured \emph{preference proof}: a compact set of positive or avoidance claims, each linked to one or more evidence identifiers. We use the term \emph{proof} operationally rather than logically; it denotes a checkable claim--evidence structure rather than a formal derivation. This structure exposes the \textit{evidence-to-proof} link to a controlled dependence test. For each claim, we mask its selected evidence and compare the resulting decrease in claim likelihood with that caused by masking a comparable control entry. A grounded claim should depend more strongly on its selected evidence than on the control entry.

\noindent\textbf{(\underline{Idea 2}): make \textit{proof-to-recommendation influence} explicit.}
The subsequent recommendation pass receives only the preference proof and the history entries selected by it. This proof-gated path prevents the model from generating a rationale while still predicting directly from the complete history. We then test the \emph{proof-to-recommendation} link by removing one preference claim while keeping the selected evidence and the remaining proof unchanged. If the claim is functionally used, its removal should reduce the target item's ranking margin. A format control changes only the presentation of the proof, separating semantic influence from generic prompt sensitivity. Finally, a listwise preservation objective keeps the proof-gated ranking distribution close to that obtained from the complete history.

The framework is not tied to a particular item representation. It applies to any autoregressive LLM recommender that can score the token sequence used to identify or describe an item. In our experiments, we instantiate PROVE-Rec on a strong generative recommender \citep{rajput2023tiger,zheng2024lcrec}; this choice defines the experimental implementation rather than the scope of the framework.

Our contributions are summarized as follows:
\begin{itemize}
\item We identify the \emph{grounding--influence gap} in LLM-based recommendation, where a generated preference rationale may be weakly supported by its selected evidence or have little effect on the final recommendation.
\item We introduce a compact preference proof and a proof-gated recommendation architecture that explicitly links preference claims to historical evidence and prevents the recommender from bypassing the generated proof.
\item We develop evidence-masking, claim-ablation, and ranking-preservation objectives to verify the evidence-to-proof and proof-to-recommendation links while maintaining recommendation utility.
\item We provide theoretical results that bound grounding and influence failures and characterize when the proof-gated model preserves the top-$K$ ranking of the full-history recommender.
\end{itemize}

The remainder of this paper is organized as follows. Section~\ref{sec:preliminaries} introduces the recommendation setting, evidence history, and preference-proof formulation. Section~\ref{sec:methodology} presents \method{}, including preference-proof induction, proof-gated recommendation, evidence-to-proof grounding, proof-to-recommendation influence, and ranking preservation. Section~\ref{sec:evaluation} describes the experimental setup and evaluates \method{} in terms of recommendation performance, component contributions, qualitative behavior, and direct verification of the evidence--proof--recommendation path. Finally, we summarize the main findings and conclude the paper.



\section{Related Work}
\label{sec:related-work}

\subsection{LLM-Based Recommendation}

Early work such as P5 reformulated multiple recommendation tasks as language-processing problems, enabling a shared sequence-to-sequence model to consume interaction histories, metadata, and natural-language prompts \citep{geng2022p5}. Subsequent methods adapted general-purpose LLMs to recommendation through parameter-efficient tuning, collaborative-semantic alignment, or explicit interfaces between behavioral representations and language models \citep{bao2023tallrec,liao2024llara,zheng2024lcrec,zhang2025collm}. More recently, RALLRec+ combines retrieval-augmented recommendation with LLM reasoning, further demonstrating the potential of integrating retrieved behavioral evidence with language-model inference \citep{luo2026rallrec}. These studies show that language knowledge and collaborative signals can complement each other for recommendation. However, they primarily focus on recommendation quality and model adaptation rather than directly examining whether an explicit preference claim selectively depends on particular historical evidence or contributes to the resulting ranking.

\subsection{Generative Recommendation and Item Tokenization}

Generative recommenders identify items through autoregressive token sequences instead of scoring every item independently. TIGER introduced semantic identifiers for generative retrieval, while LETTER learned item codes that better combine semantic and collaborative information \citep{rajput2023tiger,wang2024letter}. LC-Rec further integrates collaborative semantics into LLM adaptation, and DIGER studies differentiable semantic identifiers for end-to-end generative recommendation \citep{zheng2024lcrec,fu2026differentiable}. Beyond single-domain tokenization, UniTok develops a unified item-tokenization framework for multi-domain LLM-based recommendation, combining shared and domain-specific experts with codebook-based identifiers to preserve both transferable and domain-specific item semantics \citep{hou2026unitok}. These studies demonstrate the importance of item representations in connecting the discrete item space with autoregressive language models. In contrast, \method{} leaves the underlying item representation unchanged and instead modifies the information path through which historical evidence and explicit preference claims contribute to item prediction.

\subsection{Review-Aware and Reasoning-Enhanced Recommendation}

Reviews provide fine-grained signals about user preferences and item characteristics that are often absent from item-only interaction sequences. Prior review-aware recommenders have used deep Transformer architectures to extract textual preference signals and improve personalized recommendation \citep{gheewala2024transformer}. AENAR further models aspect-level information from reviews to capture diverse user preferences and provide interpretable evidence for rating prediction \citep{zhang2022aenar}. More recently, RAGR directly incorporates review representations into generative user sequences and aligns the resulting representations with item prediction \citep{zhang2026ragr}. In contrast, \method{} organizes historical reviews into a compact set of evidence-linked preference claims and explicitly examines whether each claim depends selectively on its cited evidence.

Recent reasoning-enhanced recommenders generate explicit or latent reasoning before predicting an item. OneRec-Think introduces in-text reasoning for generative recommendation, SIDReasoner reasons over semantic identifiers, R2Rank optimizes reasoning for ranking utility, and RE2 studies reasoning for more interpretable recommendation predictions \citep{liu2025onerecthink,he2026sidreasoner,zheng2026r2rank,chen2026prediction}. RALLRec+ additionally combines retrieval augmentation with reasoning LLMs to enhance recommendation \citep{luo2026rallrec}. These methods demonstrate that intermediate reasoning can improve recommendation, but recommendation accuracy or trajectory-level objectives alone do not directly reveal how individual reasoning statements depend on historical evidence or affect a particular ranking decision. \method{} focuses on these two complementary dependencies through evidence-to-proof grounding and proof-to-recommendation influence.

\subsection{Rationale Grounding and Intervention-Based Evaluation}

A fluent rationale can appear plausible without revealing how it relates to the evidence or the model's final prediction. ERASER evaluates rationales through evidence-based metrics and perturbation tests \citep{deyoung2020eraser}. Faithful Chain-of-Thought makes an intermediate symbolic reasoning representation necessary for producing the final answer, while later work examines whether generated reasoning traces have measurable effects on model predictions \citep{lyu2023faithful,paul2024making}. Related work in recommender systems has also explored post-hoc explainability through deletion-based diagnostics that measure how individual user or item information affects recommendation behavior \citep{arevalo2026modelagnostic}. These studies motivate intervention-based evaluation, but they do not jointly address recommendation-specific questions such as preference polarity, evidence selection from sequential user histories, claim-level ranking effects, and preservation of recommendation utility. \method{} carries this intervention-based perspective into LLM-based recommendation with two paired tests on the evidence--proof--recommendation path: whether masking a claim's linked evidence lowers its score more than masking a comparable control entry, and whether removing the claim lowers the target item's ranking margin while the selected evidence remains fixed.

\section{Preliminaries}
\label{sec:preliminaries}


\subsection{Task and Evidence History}
Let $\mathcal U$ and $\mathcal I$ denote the user and item sets. For user $u$ at prediction time $t$, the ordered evidence history observed before the target interaction is
\begin{equation}
\mathcal H_u^t=[E_{u,1},\ldots,E_{u,n_u}],
\label{eq:history}
\end{equation}
where $n_u=|\mathcal H_u^t|$ is the number of historical entries. The $j$-th \emph{evidence entry} is
\begin{equation}
E_{u,j}=(i_{u,j},r_{u,j},v_{u,j}),
\label{eq:evidence-entry}
\end{equation}
where $i_{u,j}\in\mathcal I$ is the historical item, $r_{u,j}\in\{1,\ldots,5\}\cup\{\varnothing\}$ is its star rating, with $\varnothing$ marking a missing rating, and $v_{u,j}$ is its review text. Each item $i$ also carries a category label $\kappa(i)$ from its metadata. The ordering of $\mathcal H_u^t$ preserves chronology, so the position $j$ encodes recency and timestamps are omitted. Item titles and other item text may appear in the prompt but are omitted from the notation.

Within each training example, the entries are identified in the prompt as \textbf{E1}, \textbf{E2}, and so on. The identifier is retained in every counterfactual view of that example, even when the entry's content is masked. The ground-truth next item is $y_u=i_{u,t}$. Neither the target review nor any future interaction is included in $\mathcal H_u^t$.

An LLM recommender represents item $i$ by an output token sequence $\mathbf c(i)=[c_{i,1},\ldots,c_{i,L_c}]$, where $L_c$ is the output length \citep{rajput2023tiger,wang2024letter}. The representation can be a textual identifier, a learned item token sequence, or a semantic code. With model parameters $\theta$, the normalized teacher-forced item score under a context $X$ is
\begin{equation}
 s_\theta(i\mid X)=\frac{1}{L_c}\sum_{\ell=1}^{L_c}
 \log p_\theta(c_{i,\ell}\mid X,c_{i,<\ell}).
 \label{eq:item-score}
\end{equation}

\subsection{Preference Proof}
For each user \(u\), we represent the preference proof as a compact sequence of \(m_u\) structured claims:
\begin{equation}
P_u=[C_{u,1},\ldots,C_{u,m_u}],
\qquad
C_{u,k}=(\sigma_{u,k},a_{u,k},S_{u,k}),
\label{eq:proof}
\end{equation}
where \(\sigma_{u,k}\in\{+1,-1\}\) is the polarity of the claim, rendered in the proof text as \emph{prefer} for \(+1\) and \emph{avoid} for \(-1\), \(a_{u,k}\) is a normalized preference aspect, and \(S_{u,k}\subseteq\{1,\ldots,n_u\}\) contains the indices of the evidence entries cited for the claim. For example,
\[
C_{u,1}=(+1,\text{comfort},\{1\})
\]
indicates that the first claim for user $u$ expresses a positive preference for ``comfort'' and is supported by evidence entry \textbf{E1}.

The complete set of evidence selected by the proof is
\begin{equation}
S_u=\bigcup_{k=1}^{m_u}S_{u,k}.
\label{eq:selected-set}
\end{equation}
Evidence entries indexed by \(S_u\) are \emph{selected evidence}; the remaining entries, indexed by \(U_u=\{1,\ldots,n_u\}\setminus S_u\), are \emph{unselected evidence}. An unselected entry is not assumed to be irrelevant; the current proof simply does not cite it.

To keep the proof concise and auditable, we restrict both the number of claims and the number of selected evidence entries:
\begin{equation}
1\leq m_u\leq K_C,
\qquad
1\leq |S_u|\leq K_E,
\label{eq:compact-proof}
\end{equation}
where \(K_C\) and \(K_E\) are the claim and evidence budgets. The data-validation filter of Section~\ref{sec:evaluation} enforces the same two budgets, written there as \(C_{\max}=K_C\) and \(E_{\max}=K_E\). Unselected entries supply the control entries of Section~\ref{sec:grounding}, so a history in which every entry is selected (\(U_u=\varnothing\)) contributes no grounding term.

\section{Methodology}
\label{sec:methodology}


\begin{figure*}[t]
  \centering
  \includegraphics[width=0.99\textwidth]{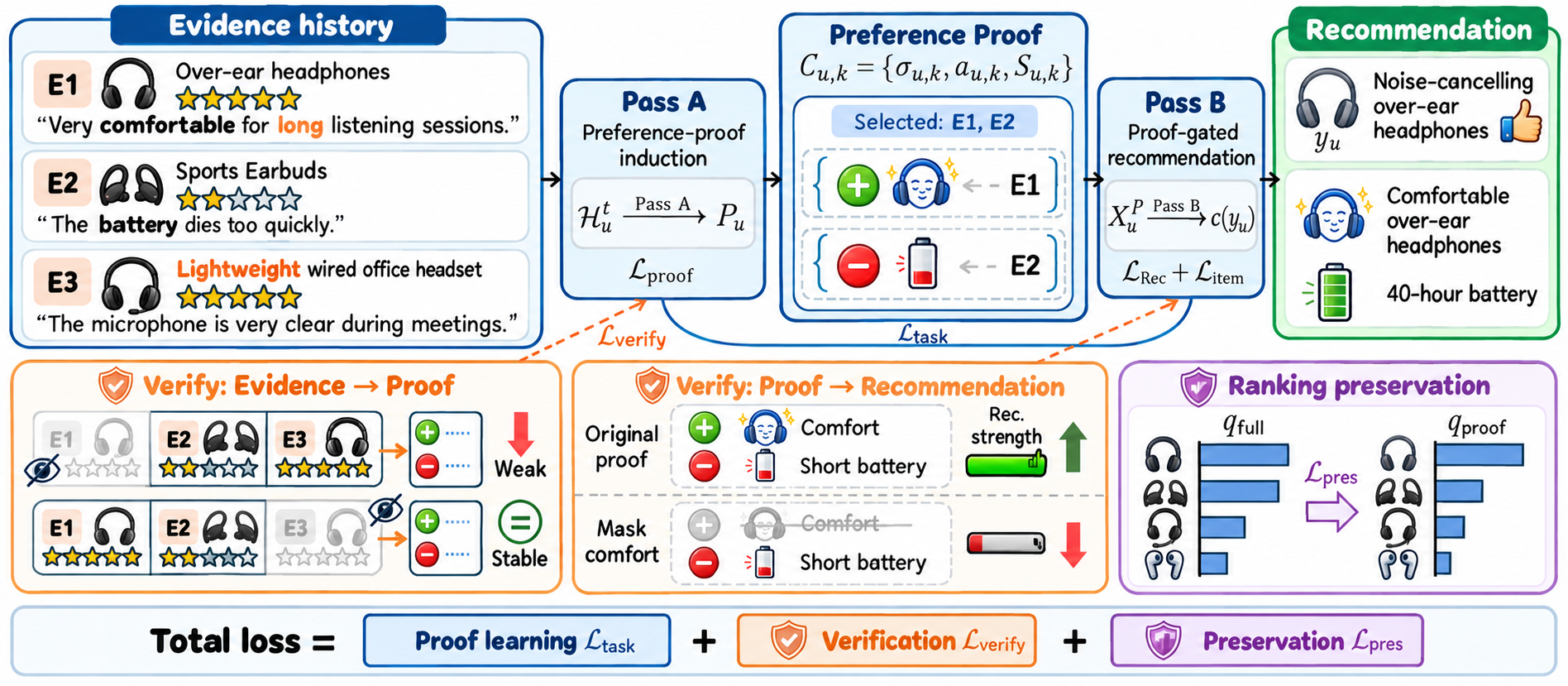}
  \Description{Overview of PROVE-Rec with evidence history, preference-proof induction, proof-gated recommendation, grounding and influence verification, and ranking preservation.}
  \caption{\textbf{Overview of \method{} on the audio example of Figure~\ref{fig:motivation}.} Pass A converts the full history into a compact preference proof that selects \textbf{E1} and \textbf{E2}. Pass B predicts the next item from the proof and the selected evidence only. The grounding check compares the comfort-claim score after masking the selected entry \textbf{E1} with the score after masking the control entry \textbf{E3}. The influence check deletes the comfort claim, keeps \textbf{E1} and \textbf{E2} in the prompt, and measures the change in the target ranking margin. A preservation term aligns the proof-gated ranking distribution with the distribution the same model assigns from the full history.}
  \label{fig:overview}
\end{figure*}

\subsection{Method Overview}
Figure~\ref{fig:overview} follows \method{} through the audio-product example. The evidence history $\mathcal H_u^t$ contains three entries: a positive review praising the comfort of over-ear headphones (\textbf{E1}), a low-rated review criticizing the short battery life of sports earbuds (\textbf{E2}), and a positive review praising the microphone of an office headset (\textbf{E3}). Pass A generates a preference proof $P_u$ with two claims: the user prefers comfort, citing \textbf{E1}, and avoids short battery life, citing \textbf{E2}. Pass B then uses only the proof and the selected entries \textbf{E1} and \textbf{E2} to generate the next item $y_u$, a comfortable over-ear headset with long battery life.

The same LLM is used in both passes:
\begin{equation}
\mathcal H_u^t
\xrightarrow{\text{Pass A}}
P_u,
\qquad
X_u^P
\xrightarrow{\text{Pass B}}
\mathbf c(y_u),
\label{eq:two-pass}
\end{equation}
where $\mathbf c(y_u)$ is the token sequence representing the target item. The input to Pass B is the proof-gated context
\begin{equation}
X_u^P=\phi \left(P_u,\mathcal H_u^t[S_u]\right),
\label{eq:proof-context}
\end{equation}
where $\mathcal H_u^t[S_u]$ keeps the selected entries in chronological order. The function $\phi$ inserts the proof and the selected evidence into a fixed prompt template and has no trainable parameters.

The two-pass model factorizes the joint prediction as
\begin{equation}
 p_\theta(P_u,\mathbf c(y_u)\mid\mathcal H_u^t)
 =p_\theta(P_u\mid\mathcal H_u^t)
 p_\theta(\mathbf c(y_u)\mid X_u^P).
\label{eq:factorization}
\end{equation}
Training checks one link in each factor: whether the selected evidence drives the claims of the proof (Section~\ref{sec:grounding}), and whether the claims drive the recommendation (Section~\ref{sec:influence}). A ranking-preservation term (Section~\ref{sec:preservation}) limits the information lost by restricting Pass B to the proof-gated context.

\subsection{Preference-Proof Induction}
Pass A maps the complete pre-target evidence history \(\mathcal H_u^t\) to a structured preference proof \(P_u\), rendered in a fixed textual format. We train Pass A with the token-normalized generation loss
\begin{equation}
\mathcal L_{\mathrm{proof}}
=
-\frac{1}{|P_u|}
\log p_\theta(P_u\mid\mathcal H_u^t),
\label{eq:proof-loss}
\end{equation}
where \(|P_u|\) is the number of tokens in the rendered proof. Positive and avoidance claims share one format and differ only in \(\sigma_{u,k}\), so a low-rated review supports an avoidance claim in the same way that a high-rated review supports a positive one.

Pass A is supervised with offline preference proofs generated from the pre-target history only. A proof is retained only if it passes automatic checks on format, evidence identifiers, polarity labels, the budgets in Eq.~\eqref{eq:compact-proof}, and temporal leakage (Section~\ref{sec:data-preprocessing}). These checks ensure that every claim cites valid pre-target entries within budget. Whether the cited content actually drives the claim inside the model is what the grounding contrast of Section~\ref{sec:grounding} measures.

\subsection{Proof-Gated Recommendation}

Pass B predicts the next item from the proof-gated context \(X_u^P\), which contains only the preference proof and its selected evidence entries. The rest of the history is not accessible to Pass B. The token-level generation loss for the target item \(y_u\) is
\begin{equation}
\mathcal L_{\mathrm{rec}} = -\frac{1}{L_c} \log p_\theta\!\left(\mathbf c(y_u)\mid X_u^P\right).
\label{eq:item-loss}
\end{equation}

For item-level ranking, let \(\mathcal N_u\) be a set of \(N\geq 1\) sampled non-target items and \(\mathcal C_u=\{y_u\}\cup\mathcal N_u\) the comparison set, so that \(|\mathcal C_u|=N+1\). For any context \(X\), the candidate distribution over \(\mathcal C_u\) is
\begin{equation}
q_\theta(i\mid X,\mathcal C_u)
=
\frac{\exp\!\left(s_\theta(i\mid X)/\tau_r\right)}
{\sum_{h\in\mathcal C_u}
\exp\!\left(s_\theta(h\mid X)/\tau_r\right)},
\qquad i\in\mathcal C_u,
\label{eq:candidate-dist}
\end{equation}
where \(s_\theta\) is the item score of Eq.~\eqref{eq:item-score} and \(\tau_r>0\) is a ranking temperature. The proof-gated item distribution is
\begin{equation}
q_{\mathrm{proof}}(i)
=
q_\theta(i\mid X_u^P,\mathcal C_u),
\label{eq:qproof}
\end{equation}
and the item loss is
\begin{equation}
\mathcal L_{\mathrm{item}}
=
-\log q_{\mathrm{proof}}(y_u).
\label{eq:rec-loss}
\end{equation}

At inference, Pass B performs constrained generation over the full item catalog but still receives only \(X_u^P\), so unselected history cannot enter the recommendation through a side channel.

\subsection{Evidence-to-Proof: \emph{Does the Claim Come from the Selected Evidence?}}
\label{sec:grounding}

The first check asks whether a claim depends on the entries cited for it. For each claim $C_{u,k}$, the normalized teacher-forced claim score under a history $\mathcal H$ is
\begin{equation}
g_\theta(C_{u,k}\mid\mathcal H)
=
\frac{1}{\operatorname{len}(C_{u,k})}
\log p_\theta
\left(
C_{u,k}\mid\mathcal H,P_{u,<k}
\right),
\label{eq:claim-score}
\end{equation}
where $\operatorname{len}(C_{u,k})$ is the number of tokens in the claim and $P_{u,<k}$ contains the preceding claims in the fixed proof order.

\paragraph{Masking and control entries}
Let $\mathcal M_j(\mathcal H_u^t)$ denote the history obtained by replacing the content of $E_{u,j}$ (item text, rating, and review) with a fixed mask token while keeping its identifier and position. For a link $(k,j)$ with $j\in S_{u,k}$, the control entry is taken from the unselected positions $U_u$. We group ratings by polarity,
\begin{equation}
\pi(r)=
\begin{cases}
+1, & r\geq 4,\\
-1, & r\leq 2,\\
0, & r=3 \text{ or } r=\varnothing,
\end{cases}
\label{eq:rating-polarity}
\end{equation}
and choose the unselected entry of the same polarity that is nearest in position,
\begin{equation}
c(j)=\operatorname*{arg\,min}_{j'\in U_u:\ \pi(r_{u,j'})=\pi(r_{u,j})}|j-j'|,
\label{eq:control-map}
\end{equation}
breaking ties first in favor of an entry whose item shares the category $\kappa(i_{u,j})$ and then in favor of the earlier position. If no unselected entry shares the polarity of $E_{u,j}$, the minimization in Eq.~\eqref{eq:control-map} runs over all of $U_u$. When a history carries no ratings, every entry has polarity $0$ and the rule reduces to recency with category tie-breaking. A link therefore lacks a control only when $U_u$ is empty, and the valid grounding links of user $u$ are
\begin{equation}
\mathcal G_u=\left\{(k,j):\ 1\leq k\leq m_u,\ j\in S_{u,k},\ U_u\neq\varnothing\right\}.
\label{eq:valid-grounding}
\end{equation}

\paragraph{Grounding contrast}
For each $(k,j)\in\mathcal G_u$, the grounding contrast is
\begin{equation}
\Delta_{\mathrm{grd}}(k,j)
=
g_\theta
\left(
C_{u,k}\mid
\mathcal M_{c(j)}(\mathcal H_u^t)
\right)
-
g_\theta
\left(
C_{u,k}\mid
\mathcal M_j(\mathcal H_u^t)
\right).
\label{eq:grounding-contrast}
\end{equation}
Both terms are scored on masked histories, so the contrast compares the selected entry with its control rather than with the intact history. A positive value means that hiding the selected entry lowers the claim score more than hiding the control. A non-positive value, ties included, is a grounding failure. In Figure~\ref{fig:overview}, masking the over-ear headphone review (\textbf{E1}) should lower the comfort-claim score more than masking the office-headset review (\textbf{E3}). The control is not assumed to be irrelevant to the user; it supplies a local baseline for the perturbation that any mask causes.

The grounding loss averages a softplus penalty over the valid links,
\begin{equation}
\mathcal L_{\mathrm{grd}}
=
\frac{1}{|\mathcal G_u|}
\sum_{(k,j)\in\mathcal G_u}
\operatorname{softplus}
\left(
-\frac{\Delta_{\mathrm{grd}}(k,j)}{\tau_g}
\right),
\label{eq:ground-loss}
\end{equation}
where $\operatorname{softplus}(x)=\log(1+\exp(x))$ and $\tau_g>0$ sets the sharpness of the comparison. Users with $\mathcal G_u=\varnothing$ contribute no grounding term.

\subsection{Proof-to-Recommendation: \emph{Does the Claim Affect the Recommendation?}}
\label{sec:influence}

The second check asks whether a claim moves the recommendation. For claim $C_{u,k}$, the claim-deleted proof $P_u^{\setminus k}$ removes $C_{u,k}$ and keeps the other claims in order. The selected evidence set $S_u$ is unchanged, which gives the claim-deleted context
\begin{equation}
X_u^{\setminus k}
=
\phi\!\left(P_u^{\setminus k},\mathcal H_u^t[S_u]\right).
\label{eq:ablated-context}
\end{equation}
When $m_u=1$, $P_u^{\setminus k}$ is the empty proof, which the template renders as an empty claim list, so Pass B sees only the selected evidence. Because $S_u$ is held fixed, the contrast measures what the claim text adds beyond the evidence it cites.

For a proof-gated context $X$, the target ranking margin is the gap between the target and its strongest sampled competitor,
\begin{equation}
M_\theta(X)
=
s_\theta(y_u\mid X)
-
\max_{i\in\mathcal N_u}
s_\theta(i\mid X).
\label{eq:margin}
\end{equation}
The margin is measured on the sampled slate $\mathcal C_u$ rather than on the full-catalog ranking produced at inference. The influence contrast of claim $k$ is
\begin{equation}
\Delta_{\mathrm{inf}}(k)
=
M_\theta(X_u^P)
-
M_\theta(X_u^{\setminus k}),
\label{eq:influence-contrast}
\end{equation}
and a positive value means that deleting the claim reduces the target's lead. A non-positive value, ties included, is an influence failure. The influence loss averages over the claims of the proof,
\begin{equation}
\mathcal L_{\mathrm{inf}}
=
\frac{1}{m_u}
\sum_{k=1}^{m_u}
\operatorname{softplus}
\left(
-\frac{\Delta_{\mathrm{inf}}(k)}{\tau_p}
\right),
\label{eq:influence-loss}
\end{equation}
where $\tau_p>0$ controls the smoothness of the pairwise objective.

The loss asks for a smaller margin rather than a change of the top-1 item because several claims usually support the same target. A correct claim that repeats another claim can still show a small $\Delta_{\mathrm{inf}}(k)$. The loss counts it as a failure, which pushes Pass B to draw on each retained claim rather than on one of them.

\subsection{Ranking Preservation}
\label{sec:preservation}

The proof-gated context keeps the recommendation path inspectable, but it can drop useful signals from the unselected history. We therefore score the same comparison set from the full-history prompt with the same parameters,
\begin{equation}
q_{\mathrm{full}}(i)
=
q_\theta(i\mid \mathcal H_u^t,\mathcal C_u),
\label{eq:qfull}
\end{equation}
and align the proof-gated distribution with it:
\begin{equation}
\mathcal L_{\mathrm{pres}}
=
D_{\mathrm{KL}}
\left(
\sg[q_{\mathrm{full}}]
\,\|\,
q_{\mathrm{proof}}
\right),
\label{eq:preserve}
\end{equation}
where \(\sg[\cdot]\) stops gradients through \(q_{\mathrm{full}}\). The distribution \(q_{\mathrm{full}}\) is not a separately trained full-history recommender. No item loss is applied to the full-history prompt, and the stop-gradient prevents \(\mathcal L_{\mathrm{pres}}\) from moving \(q_{\mathrm{full}}\) toward \(q_{\mathrm{proof}}\); it is a passive probe of the ranking the current parameters would give if Pass B saw the complete history. Both distributions use the same comparison set, scoring function, and temperature, so \(\mathcal L_{\mathrm{pres}}\) constrains relative item order without asking the two contexts to share representations. The term controls agreement, while ranking accuracy comes from \(\mathcal L_{\mathrm{rec}}\) and \(\mathcal L_{\mathrm{item}}\) on the proof-gated context.

\subsection{Training Objective}
The per-example objectives are grouped by role:
\begin{align}
\mathcal L_{\mathrm{task}}
&=\tfrac{1}{3}(\mathcal L_{\mathrm{proof}}
+\mathcal L_{\mathrm{item}}
+\mathcal L_{\mathrm{rec}}),
\label{eq:task-loss}\\
\mathcal L_{\mathrm{verify}}
&=\tfrac{1}{2}(\mathcal L_{\mathrm{grd}}
+\mathcal L_{\mathrm{inf}}),
\label{eq:verify-loss}
\end{align}
and the final objective is
\begin{equation}
\boxed{
\mathcal L
=\mathcal L_{\mathrm{task}}
+\lambda\mathcal L_{\mathrm{verify}}
+\mu\mathcal L_{\mathrm{pres}},}
\label{eq:overall}
\end{equation}
where $\lambda\geq 0$ sets the verification strength and $\mu\geq 0$ the strength of ranking preservation. For a user without valid grounding links, $\mathcal L_{\mathrm{verify}}$ reduces to $\tfrac12\mathcal L_{\mathrm{inf}}$. Training proceeds in three phases. We first train Pass A to generate preference proofs from evidence histories, then train Pass B to predict the target item from the proof-gated context, and finally optimize Eq.~\eqref{eq:overall} jointly with the masked-evidence and claim-deleted views defined above.

\subsection{What the Theory Guarantees}
\label{sec:theory}
Both results are stated for the quantities defined above, and their complete proofs are in Appendix~\ref{app:theory}. Let $\nu_{\mathrm{grd}}$ be the distribution that draws a training user $u$ with $\mathcal G_u\neq\varnothing$ uniformly and then a link $(k,j)$ uniformly from $\mathcal G_u$, and let $\nu_{\mathrm{inf}}$ draw a training user uniformly and then a claim $k$ uniformly from $\{1,\ldots,m_u\}$. Under these distributions, the expected softplus penalty equals the user-averaged $\mathcal L_{\mathrm{grd}}$ of Eq.~\eqref{eq:ground-loss} and $\mathcal L_{\mathrm{inf}}$ of Eq.~\eqref{eq:influence-loss}, respectively.

\begin{theorem}[Two-link failure-rate control]
\label{thm:failure}
Let $\tau_g,\tau_p>0$. If
\begin{equation}
\begin{aligned}
\mathbb E_{\nu_{\mathrm{grd}}}\!\left[\operatorname{softplus}\!\left(-\Delta_{\mathrm{grd}}/\tau_g\right)\right]
&\leq\epsilon_{\mathrm{grd}},\\
\mathbb E_{\nu_{\mathrm{inf}}}\!\left[\operatorname{softplus}\!\left(-\Delta_{\mathrm{inf}}/\tau_p\right)\right]
&\leq\epsilon_{\mathrm{inf}}.
\end{aligned}
\label{eq:failure-hyp}
\end{equation}
then
\begin{equation}
\Pr_{\nu_{\mathrm{grd}}}(\Delta_{\mathrm{grd}}\leq0)
\leq\frac{\epsilon_{\mathrm{grd}}}{\log 2},
\qquad
\Pr_{\nu_{\mathrm{inf}}}(\Delta_{\mathrm{inf}}\leq0)
\leq\frac{\epsilon_{\mathrm{inf}}}{\log 2}.
\label{eq:failure-bound}
\end{equation}
\end{theorem}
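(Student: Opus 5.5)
The argument is a Markov-type inequality applied to the softplus penalty; the two bounds in Eq.~\eqref{eq:failure-bound} follow by the same reasoning, so I would prove a single lemma and instantiate it twice. Let $\Delta$ be a real-valued random variable under some distribution $\nu$, let $\tau>0$, and let $\epsilon$ bound $\mathbb E_\nu[\operatorname{softplus}(-\Delta/\tau)]$. The lemma asserts that $\Pr_\nu(\Delta\le 0)\le \epsilon/\log 2$. For grounding I would take $(\Delta,\tau,\nu,\epsilon)=(\Delta_{\mathrm{grd}},\tau_g,\nu_{\mathrm{grd}},\epsilon_{\mathrm{grd}})$, and for influence $(\Delta_{\mathrm{inf}},\tau_p,\nu_{\mathrm{inf}},\epsilon_{\mathrm{inf}})$. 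Both $\nu_{\mathrm{grd}}$ and $\nu_{\mathrm{inf}}$ are finite mixtures over the training set, so all expectations are finite sums and no measurability issues arise.

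The proof of the lemma rests on two properties of softplus. First, $\operatorname{softplus}(x)=\log(1+e^x)>0$ for every $x$. Second, softplus is strictly increasing. On the failure event $\{\Delta\le 0\}$ we have $-\Delta/\tau\ge 0$ because $\tau>0$, and monotonicity then gives $\operatorname{softplus}(-\Delta/\tau)\ge\operatorname{softplus}(0)=\log 2$. Combining this with nonnegativity yields the pointwise bound
\[
\operatorname{softplus}(-\Delta/\tau)\;\ge\;\log 2\cdot\Ind[\Delta\le 0].
\]
Taking expectations under $\nu$ gives $\epsilon\ge \mathbb E_\nu[\operatorname{softplus}(-\Delta/\tau)]\ge \log 2\cdot\Pr_\nu(\Delta\le 0)$. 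Dividing by $\log 2>0$ finishes the lemma.

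The analytic content is elementary, so the main obstacle is bookkeeping. I need to confirm that the failure probabilities in the statement are taken under exactly the distributions that realize the averaged losses. For grounding, $\nu_{\mathrm{grd}}$ draws a user with $\mathcal G_u\neq\varnothing$ and then a link uniformly from $\mathcal G_u$. Its expected penalty is therefore the average over such users of $\mathcal L_{\mathrm{grd}}$ in Eq.~\eqref{eq:ground-loss}, and users without valid links are excluded consistently on both sides of the bound. Ties are counted as failures, and they are covered because the inequality uses $\Delta\le 0$. I would close by noting that the constant $1/\log 2$ cannot be improved in general: a distribution concentrated at $\Delta=0$ attains equality.
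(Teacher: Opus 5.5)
Your proposal is correct and follows the paper's proof essentially step for step: both derive the pointwise bound $\Ind\{\Delta\le 0\}\le \operatorname{softplus}(-\Delta/\tau)/\log 2$ from the positivity and monotonicity of softplus, take expectations under $\nu$, and instantiate the argument once for grounding and once for influence. Your tightness remark (equality when $\Delta=0$ almost surely) is a valid small addition; the paper instead adds a margin refinement that replaces $\log 2$ with $\log(1+e^{\eta/\tau})$ for the event $\{\Delta\le-\eta\}$.
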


\noindent\emph{Proof sketch.} On a failure event the softplus argument is non-negative, so the penalty is at least $\log 2$. The failure indicator is therefore at most the penalty divided by $\log 2$, and taking expectations gives Eq.~\eqref{eq:failure-bound}. The complete proof is in~\ref{app:proof_failure}. \hfill$\square$

A grounding failure is a link whose selected-entry mask hurts the claim no more than the control mask, and an influence failure is a claim whose deletion leaves the target's lead on the sampled slate intact. Theorem~\ref{thm:failure} turns an average verification loss into a ceiling on how often either happens, and the ceiling constrains training once $\epsilon$ falls below $\log 2$. Because both views in $\Delta_{\mathrm{grd}}$ are masked, the bound concerns the selected entry relative to its control, not either mask relative to the intact history.

\begin{theorem}[Top-$K$ preservation]
\label{thm:topk}
Let $q_{\mathrm{full}}$ and $q_{\mathrm{proof}}$ be the distributions of Eqs.~\eqref{eq:qfull} and~\eqref{eq:qproof} on the comparison set $\mathcal C_u$ of one user, and let $1\leq K<|\mathcal C_u|$. Order the items so that $q_{\mathrm{full}}(i_{(1)})\geq\cdots\geq q_{\mathrm{full}}(i_{(|\mathcal C_u|)})$, and define the boundary margin
\begin{equation}
\gamma_K
=
q_{\mathrm{full}}(i_{(K)})
-
q_{\mathrm{full}}(i_{(K+1)}).
\label{eq:boundary-margin}
\end{equation}
If $D_{\mathrm{KL}}(q_{\mathrm{full}}\|q_{\mathrm{proof}})\leq\delta$ and $\gamma_K>\sqrt{2\delta}$, then the $K$ most probable items form the same set under both distributions,
\begin{equation}
\TopK(q_{\mathrm{full}})
=
\TopK(q_{\mathrm{proof}}).
\label{eq:topk-equal}
\end{equation}
\end{theorem}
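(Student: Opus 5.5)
The plan is to combine Pinsker's inequality with a pointwise perturbation argument on the comparison set $\mathcal C_u$. The boundary margin $\gamma_K$ separates the top-$K$ block of $q_{\mathrm{full}}$ from the rest. The KL hypothesis will bound how far any single probability can move when we pass to $q_{\mathrm{proof}}$. If the largest possible movement on each side, summed, is smaller than $\gamma_K$, no item can cross the boundary.

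\textbf{Step 1: from KL to a pointwise bound.} Pinsker's inequality applied to the two distributions on the finite set $\mathcal C_u$ gives
\[
\mathrm{TV}(q_{\mathrm{full}},q_{\mathrm{proof}})\leq\sqrt{\tfrac12 D_{\mathrm{KL}}(q_{\mathrm{full}}\|q_{\mathrm{proof}})}\leq\sqrt{\delta/2}.
\]
For each single item $i$, take the event $A=\{i\}$. Since $|q_{\mathrm{full}}(A)-q_{\mathrm{proof}}(A)|\leq\mathrm{TV}$ for every event $A$, this yields
\[
|q_{\mathrm{full}}(i)-q_{\mathrm{proof}}(i)|\leq\sqrt{\delta/2}\quad\text{for all } i\in\mathcal C_u.
\]

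\textbf{Step 2: well-definedness.} Because $\gamma_K>\sqrt{2\delta}\geq 0$, we have $q_{\mathrm{full}}(i_{(K)})>q_{\mathrm{full}}(i_{(K+1)})$. So $\TopK(q_{\mathrm{full}})=T:=\{i_{(1)},\dots,i_{(K)}\}$ is unambiguous regardless of ties elsewhere in the ordering. Every $a\in T$ satisfies $q_{\mathrm{full}}(a)\geq q_{\mathrm{full}}(i_{(K)})$, and every $b\notin T$ satisfies $q_{\mathrm{full}}(b)\leq q_{\mathrm{full}}(i_{(K+1)})$.

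\textbf{Step 3: strict separation under $q_{\mathrm{proof}}$.} For any $a\in T$ and $b\notin T$, Step 1 applied to both items gives
\[
q_{\mathrm{proof}}(a)-q_{\mathrm{proof}}(b)\geq q_{\mathrm{full}}(a)-q_{\mathrm{full}}(b)-2\sqrt{\delta/2}\geq\gamma_K-\sqrt{2\delta}>0.
\]
Thus every item in $T$ strictly dominates every item outside $T$ under $q_{\mathrm{proof}}$. Exactly $K$ items then lie strictly above all the others, so the $K$ most probable items under $q_{\mathrm{proof}}$ are uniquely $T$, which is Eq.~\eqref{eq:topk-equal}.

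\textbf{Expected obstacle.} The only delicate point is the constant. The pointwise deviation must be bounded by the TV distance itself, not by twice it. One must use the convention $\mathrm{TV}=\tfrac12\|\cdot\|_1$, under which Pinsker reads $\mathrm{TV}\leq\sqrt{\delta/2}$. With that convention the two-sided loss is $2\sqrt{\delta/2}=\sqrt{2\delta}$, matching the stated threshold exactly. Tie handling is covered by the strict inequalities in Steps 2 and 3.
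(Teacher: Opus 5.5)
Your proof is correct and matches the paper's argument essentially line for line. Pinsker's inequality, with the convention $\mathrm{TV}=\tfrac12\|\cdot\|_1$, bounds each singleton deviation by $\sqrt{\delta/2}$, and the triangle inequality applied to any $a\in\TopK(q_{\mathrm{full}})$ and $b\notin\TopK(q_{\mathrm{full}})$ then gives $q_{\mathrm{proof}}(a)-q_{\mathrm{proof}}(b)\geq\gamma_K-\sqrt{2\delta}>0$, exactly as in the appendix.
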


\noindent\emph{Proof sketch.} Pinsker's inequality \citep{canonne2022short} bounds the total-variation distance, and hence every coordinate difference $|q_{\mathrm{full}}(i)-q_{\mathrm{proof}}(i)|$, by $\sqrt{\delta/2}$. Two such perturbations cannot close a gap larger than $\sqrt{2\delta}$, so every item above the boundary under $q_{\mathrm{full}}$ stays above every item below it under $q_{\mathrm{proof}}$. The complete proof is in~\ref{app:proof_topk}. \hfill$\square$

Since $\gamma_K>0$, the top-$K$ set under $q_{\mathrm{full}}$ is unique, and the conclusion makes it unique under $q_{\mathrm{proof}}$ as well. The per-example value of $\mathcal L_{\mathrm{pres}}$ is exactly the divergence in the hypothesis, so Theorem~\ref{thm:topk} is a condition on the preservation loss: where the full-history ranking separates ranks $K$ and $K+1$ clearly, a small $\mathcal L_{\mathrm{pres}}$ keeps the proof-gated ranking of the slate on the same top-$K$ set. The guarantee lives on the sampled comparison set $\mathcal C_u$. Full-catalog Recall and NDCG in Section~\ref{sec:evaluation} are measured separately and fall under the theorem only when $\mathcal C_u$ is the whole catalog.

\section{Experimental Evaluation}
\label{sec:evaluation}

We conduct comprehensive experiments to evaluate the recommendation effectiveness of \method{}, analyze the contribution of its main components, and directly examine the behavior of the generated preference proofs. Specifically, we investigate the following five research questions (\textbf{RQs}):

\begin{itemize}
    \item {\bf RQ1}: Does verifiable preference reasoning improve recommendation performance?
    \item {\bf RQ2}: How do the main components of \method{} contribute to recommendation performance?
    \item {\bf RQ3}: Which information sources and reasoning designs are most effective?
    \item {\bf RQ4}: How does \method{} select evidence, construct preference proofs, and support recommendations in a qualitative case study?
    \item {\bf RQ5}: Do the verification objectives strengthen evidence dependence and decision relevance of the generated preference proofs?
\end{itemize}


\subsection{Datasets, Preprocessing, and Preference-Proof Construction}
\label{sec:data-preprocessing}

We evaluate \method{} on on two Amazon review domains \citep{ni2019amazon}, \textit{Grocery} and \textit{Tools}, together with \textit{Yelp}\footnote{https://www.yelp.com/dataset/.} All datasets contain user--item interactions together with textual reviews, enabling us to construct evidence-enriched interaction histories and preference proofs. Table~\ref{tab:dataset-stats} summarizes the processed datasets.


\begin{table}[t]
\centering
\scriptsize
\setlength{\tabcolsep}{2pt}
\caption{Statistics of the processed datasets. Density is
$100|\mathcal{R}|/(|\mathcal{U}||\mathcal{I}|)$, and review coverage denotes
the percentage of interactions with nonempty aligned review text.}
\label{tab:dataset-stats}
\begin{tabular}{lrrrrrr}
\toprule
Dataset & Users & Items & Interactions &
Density (\%) & Avg. length & Review cov. (\%) \\
\midrule
Grocery & 14{,}681 & 8{,}713  & 151{,}254 & 0.1182 & 10.30 & 99.99 \\
Tools   & 16{,}638 & 10{,}217 & 134{,}476 & 0.0791 &  8.08 & 99.95 \\
Yelp    & 30{,}431 & 20{,}033 & 316{,}354 & 0.0519 & 10.40 & 98.87 \\
\bottomrule
\end{tabular}
\end{table}

\paragraph{Data preprocessing}
For each user, interactions are ordered chronologically. The last interaction is used for testing, the second-to-last for validation, and the remaining interactions for training. Each model input contains only interactions that occur before the corresponding target, and previously interacted items are excluded from the recommendation candidates during evaluation.

Each example retains at most 20 pre-target interactions. Ratings and reviews are included for the three most recent entries, with review text truncated to 64 characters and review summaries to 32 characters. Reviews are aligned with the processed interaction sequences using user and item identifiers, while ambiguous duplicate matches are excluded. The target item, its rating and review, and all future interactions are removed from both Pass A and Pass B to prevent temporal leakage.

\paragraph{Preference-proof construction.}
Pass A is supervised using offline preference proofs constructed exclusively from the pre-target history. Each proof consists of at most $K_C$ preference claims and selects at most $K_E$ historical evidence entries. Each claim contains a normalized preference aspect, a polarity (\textsc{prefer} or \textsc{avoid}), and one or more identifiers of supporting history entries.

Candidate proofs are retained only if they satisfy the required output format, contain valid and unique evidence identifiers, respect the claim and evidence budgets, and contain no target or future information. We additionally check that every cited identifier refers to an entry in the corresponding pre-target history. These filters ensure structural validity, referential consistency, budget compliance, and explicit leakage prevention; they are not treated as tests of semantic entailment between a claim and its cited evidence. Table~\ref{tab:proof-stats} reports the resulting proof statistics.

\begin{table}[t]
\centering
\small
\setlength{\tabcolsep}{5pt}
\caption{Statistics of the offline preference proofs. Proofs are constructed
from histories of at most 20 interactions with an evidence budget of
$K_E=3$. Acceptance rate denotes the percentage of candidate proofs retained
after structural and leakage validation.}
\label{tab:proof-stats}
\begin{tabular}{lrrrr}
\toprule
Dataset & Proofs & Claims/proof & Selected/proof & Accept rate \\
\midrule
Grocery & 136{,}559 & 1.189 & 2.409 & 99.990\% \\
Tools   & 117{,}829 & 1.145 & 2.398 & 99.992\% \\
Yelp    & 274{,}625 & 1.000 & 2.675 & 96.049\% \\
\bottomrule
\end{tabular}
\end{table}

\subsection{Experimental Setup}


\paragraph{Implementation Details}
Our implementation uses a semantic-code generative recommender built on TIGER and LC-Rec \citep{rajput2023tiger,zheng2024lcrec}. For each example, we retain up to 20 historical evidence entries and include ratings and reviews for the three most recent entries. Review text and item metadata are truncated to fixed lengths. We preserve the original item tokenizer, recommendation objective, and catalog-constrained decoding procedure. Additional implementation and hyperparameter details are provided in the supplementary material.

\begin{table*}[!t]\centering
\setlength\tabcolsep{4.5pt}
\small

\begin{tabular}{l|cccc|cccc|cccc}
\toprule[1pt]
\multicolumn{1}{c|}{} & \multicolumn{4}{c|}{\bf Grocery} & \multicolumn{4}{c|}{\bf Tools} & \multicolumn{4}{c}{\bf Yelp} \\
\cmidrule{2-13}
{\bf Method} & {\bf R@5} & {\bf R@10} & {\bf N@5} & {\bf N@10}
& {\bf R@5} & {\bf R@10} & {\bf N@5} & {\bf N@10}
& {\bf R@5} & {\bf R@10} & {\bf N@5} & {\bf N@10} \\
\midrule[1pt]

\multicolumn{13}{l}{\textit{Traditional and Transformer-based Recommenders}} \\
\midrule
{\bf SASRec} &0.0359 &0.0509 &0.0183 &0.0276 & 0.0135& 0.0175&0.0117 &0.0134 &0.0163 &0.0354 & 0.0122& 0.0183\\
{\bf BERT4Rec} & 0.0294& 0.0448 & 0.0187&0.0237 & 0.0106&0.0184 &0.0067 &0.0092 &0.0155 &0.0272 & 0.0094& 0.0131\\

\cmidrule{1-13}
\multicolumn{13}{l}{\textit{Item-tokenization-based LLM Recommenders}} \\
\midrule
{\bf P5-SemID} &0.0367 &0.0524 & 0.0201&0.0285 & 0.0177&0.0297 &0.0108 &0.0133 & 0.0187&0.0324 &0.0129 &0.0188 \\
{\bf TIGER} &0.0423 & 0.0635& 0.0273& 0.0341& 0.0216&0.0327 &0.0139 & 0.0175 &0.0241 &\underline{0.0397} & 0.0155 & \underline{0.0205}\\
{\bf LC-Rec} &0.0437 & 0.0622& \underline{0.0281} &\underline{0.0349} & 0.0232 & 0.0351&0.0148 & 0.0207& 0.0232&0.0388 &0.0147 & 0.0195\\
{\bf DIGER} & \underline{0.0440}& 0.0641& 0.0279& 0.0345& \underline{0.0241}& \underline{0.0355}& \underline{0.0153}& \underline{0.0212}& 0.0247& 0.0391& \underline{0.0161}& 0.0199\\

\cmidrule{1-13}
\multicolumn{13}{l}{\textit{LLM-enhanced Recommenders}} \\
\midrule
{\bf CoLLM}
& 0.0398 & 0.0619
& \underline{0.0281} & 0.0340
& 0.0203 & 0.0315
& 0.0142 & 0.0181
& 0.0237 & 0.0389
& 0.0158 & 0.0202 \\

{\bf LLM-SRec}
& 0.0431 & \underline{0.0642}
& 0.0268 & 0.0347
& 0.0221 & 0.0334
& 0.0136 & 0.0173
& \underline{0.0248} & 0.0395
& 0.0153 & 0.0198 \\
{\bf SIDReasoner} & 0.0415& 0.0625& 0.0274& 0.0329& 0.0214& 0.0321& 0.0129& 0.0165& 0.0231 & 0.0377& 0.0146& 0.0184\\
{\bf RE2} & 0.0377 & 0.0602& 0.0266& 0.0322& 0.0209& 0.0327& 0.0144& 0.0184 & 0.0211& 0.0364&0.0139 & 0.0188\\

\cmidrule{1-13}
\rowcolor{lightgray!30}
{\bf \method} & \textbf{0.0450}& \textbf{0.0659}& \textbf{0.0294}& \textbf{0.0356}& \textbf{0.0253}& \textbf{0.0371}& \textbf{0.0162}& \textbf{0.0221}& \textbf{0.0262}& \textbf{0.0408}&\textbf{0.0173} &\textbf{0.0211} \\

\cmidrule{1-13}
{\bf Improvement} & 2.27\%& 2.65\%& 4.63\% & 2.01\%& 4.98\%& 4.51\%& 5.88\% & 4.25\%& 5.65\%&2.77\% & 7.45\%& 2.93\% \\

\bottomrule[1pt]
\end{tabular}
\caption{Performance comparison among \method{}  and representative recommendation competitors on the three benchmark datasets. The best and second-best results will be highlighted by bold and underline after all experiments are completed.}
\label{tab:main_results}
\end{table*}

\paragraph{Baselines}
We compare \method{} with three groups of representative methods: 1) \textit{Traditional and transformer-based recommenders} include SASRec \citep{kang2018sasrec} and BERT4Rec \citep{sun2019bert4rec}; 2) \textit{Item-tokenization-based LLM recommenders} include P5-SemID \citep{geng2022p5}, TIGER \citep{rajput2023tiger}, LC-Rec \citep{zheng2024lcrec}, and DIGER \citep{fu2026differentiable}; 3) \textit{LLM-enhanced recommenders} include CoLLM \citep{zhang2025collm}, LLM-SRec \citep{kim2025lost}, SIDReasoner \citep{he2026reasoning}, and RE2 \citep{chen2026prediction}.

For controlled analysis, we additionally construct same-backbone variants that remove or replace individual components of \method{}, including direct recommendation, free-form rationale generation, structured preference proofs without verification, and variants without grounding, influence, or ranking preservation.

\paragraph{Evaluation Metrics}
We evaluate recommendation performance using Recall@$M$ (R@$M$) and NDCG@$M$ (N@$M$), where $M\in\{5,10\}$. For each user, previously interacted items are excluded from the candidate set. All generative methods use catalog-constrained decoding to ensure that generated item codes correspond to valid catalog items \citep{rajput2023tiger,zheng2024lcrec}.

\begin{table*}[t]
\centering
\small
\setlength{\tabcolsep}{6.0pt}
\begin{tabular}{l|cc|cc|cc}
\toprule
& \multicolumn{2}{c|}{\textbf{Grocery}}
& \multicolumn{2}{c|}{\textbf{Tools}}
& \multicolumn{2}{c}{\textbf{Yelp}} \\
\cmidrule(lr){2-3}
\cmidrule(lr){4-5}
\cmidrule(lr){6-7}
\textbf{Method}
& \textbf{R@10} & \textbf{N@10}
& \textbf{R@10} & \textbf{N@10}
& \textbf{R@10} & \textbf{N@10} \\
\midrule
Direct Recommendation
& 0.0608 & 0.0317 & 0.0321 & 0.0187 & 0.0366 & 0.0181 \\

\method{} without Evidence-to-Proof Grounding
& 0.0633 & 0.0339 & 0.0347 & 0.0204 & 0.0387 & 0.0193 \\

\method{} without Proof-to-Recommendation Influence
& 0.0640 & 0.0342 & 0.0355& 0.0210 & 0.0393 & 0.0198 \\

\method{} without Ranking Preservation
& 0.0651 & 0.0348 & 0.0362 & 0.0215 & 0.0401 & 0.0202 \\

\rowcolor{lightgray!30}
\textbf{\method{}}
& \textbf{0.0659} & \textbf{0.0356} & \textbf{0.0371} & \textbf{0.0221} & \textbf{0.0408} & \textbf{0.0211} \\
\bottomrule
\end{tabular}
\caption{Recommendation performance of \method{} and its variants on the three datasets.}
\label{tab:verification_ablation}
\end{table*}

\subsection{RQ1: Does Verifiable Preference Reasoning Improve Recommendation Performance?}

Table~\ref{tab:main_results} compares \method{} with traditional and transformer-based recommenders, item-tokenization-based LLM recommenders, and LLM-enhanced recommenders on Grocery, Tools, and Yelp. We examine whether verifiable preference reasoning improves recommendation accuracy while preserving the effectiveness of the underlying generative recommender.

\paragraph{Overall performance}
\method{} achieves the best result on all three datasets. Compared with the strongest baseline for each metric, the relative improvements range from 2.01\% to 7.45\%, with an average gain of 4.17\%. These results show that introducing a compact preference proof and a proof-gated recommendation path not only preserves recommendation utility, but also consistently improves target-item ranking. The improvements are especially pronounced on Tools and Yelp, where the average gains over the strongest baselines are 4.91\% and 4.70\%, respectively. The largest improvement, 7.45\%, is obtained on Yelp for NDCG@5. This suggests that review-derived preference claims are particularly useful when item-transition patterns alone cannot capture fine-grained user preferences.

Item-tokenization-based LLM recommenders generally outperform traditional sequential models, confirming the benefit of language-compatible item representations. Nevertheless, \method{} consistently surpasses strong tokenization and LLM-enhanced baselines, including DIGER, LLM-SRec, CoLLM, and SIDReasoner. This indicates that the gains of \method{} cannot be attributed solely to LLM representations or intermediate reasoning. Explicitly grounding preference claims in selected historical evidence and making these claims part of the recommendation path provides additional predictive value.

\subsection{RQ2: Are Preference Claims Grounded and Influential?}

We conduct an ablation study to assess the contribution of each major component. \emph{Direct Recommendation} removes the two-pass architecture and predicts the next item directly from the complete history. The remaining variants remove evidence-to-proof grounding, proof-to-recommendation influence, or ranking preservation, respectively. All variants use the same backbone, item representation, training data, and evaluation settings.

As shown in Table~\ref{tab:verification_ablation}, the full model achieves the best Recall@10 and NDCG@10 on all three datasets. \emph{Direct Recommendation} performs worst, with an average relative decrease of 12.0\%. This confirms the benefit of first extracting a compact preference proof and then recommending from the proof and its selected evidence. Removing evidence-to-proof grounding and proof-to-recommendation influence leads to average decreases of 6.1\% and 4.3\%, respectively. Thus, encouraging preference claims to depend on their selected evidence and to affect the final recommendation also improves target-item ranking. Removing ranking preservation causes a smaller but consistent decrease of 2.4\%, showing that it helps retain useful information from the complete history. Overall, the two-pass architecture provides the largest gain, while grounding, influence, and preservation offer complementary improvements.











\begin{table}[t]
\centering
\small
\setlength{\tabcolsep}{5.5pt}
\begin{tabular}{lccc}
\toprule
\textbf{Variant}
& \textbf{Grocery}
& \textbf{Tools}
& \textbf{Yelp} \\
\midrule

\multicolumn{4}{l}{\textit{Information Sources}} \\
\midrule
Item Only
& 0.0265 & 0.0143 & 0.0147 \\

+ Rating
& 0.0272 & 0.0146 & 0.0153 \\

+ Review
& 0.0297 & 0.0176 & 0.0172 \\

+ Rating + Review
& 0.0317 & 0.0187 & 0.0181 \\

\midrule
\multicolumn{4}{l}{\textit{Reasoning Design}} \\
\midrule
Direct Review Input
& 0.0333 & 0.0192 & 0.0189 \\

Free-Form Rationale
& 0.0328 & 0.0196 & 0.0186 \\

Structured Preference Proof
& 0.0344 & 0.0209 & 0.0197 \\

\rowcolor{lightgray!30}
\textbf{\method{}}
& \textbf{0.0356} & \textbf{0.0221} & \textbf{0.0211} \\
\bottomrule
\end{tabular}
\caption{NDCG@10 of different input signals and reasoning designs on the three datasets.}
\label{tab:information_design}
\end{table}

\subsection{RQ3: Which Information Sources and Reasoning Designs Matter?}

Table~\ref{tab:information_design} compares different input signals and reasoning designs under the same backbone and evaluation settings. Ratings consistently improve over item-only histories by providing preference polarity, while reviews yield larger gains by revealing fine-grained aspects of user feedback. Combining ratings and reviews performs best among the input-only variants, confirming that the two signals are complementary.

For reasoning design, \emph{Direct Review Input} predicts directly from the review-enriched history, while \emph{Free-Form Rationale} generates an unconstrained natural-language rationale before recommendation. \emph{Structured Preference Proof} instead represents reasoning as compact claims linked to selected evidence. Free-form rationales do not consistently outperform direct review input, suggesting that unconstrained reasoning may introduce noise or be weakly used by the recommender. Structured preference proofs perform better across all datasets, showing the benefit of a concise and evidence-linked intermediate representation. The full \method{} achieves the strongest performance by further verifying the evidence-to-proof and proof-to-recommendation links while preserving recommendation utility.

\begin{table}[t]
\centering
\scriptsize
\renewcommand{\arraystretch}{1.15}
\setlength{\tabcolsep}{4pt}
\caption{A Grocery case illustrating evidence selection, preference-proof generation, and next-item recommendation. Ratings are shown in parentheses.}

\begin{tabularx}{0.8\columnwidth}{@{}lX@{}}
\toprule

\rowcolor{lightgray!20}
\multicolumn{2}{l}{\textit{\textbf{Selected Evidence}}} \\

\textbf{E1}: Korean noodle soup (5)--tasty, healthy, and easy to prepare. \\

\textbf{E3}: Herbal tea (5)--healthy, fresh, and good tasting. \\

\textbf{E5}: Instant miso ramen (4)--tasty and easy to prepare. \\

\addlinespace[2pt]

\rowcolor{lightgray!20}
\multicolumn{2}{l}{\textit{\textbf{Preference Proof}}} \\

\multicolumn{2}{@{}X@{}}{
\textbf{Preference claims}: \emph{Savory Asian soup products}
[\textbf{E1}, \textbf{E5}];
\emph{easy preparation}
[\textbf{E1}, \textbf{E5}];
\emph{healthy products with good taste}
[\textbf{E1}, \textbf{E3}].
} \\

\multicolumn{2}{@{}X@{}}{
\textbf{Evidence selection}: Retains \textbf{E1}, \textbf{E3}, and \textbf{E5},
while excluding three interactions that provide weaker support for these claims.
} \\

\addlinespace[2pt]

\rowcolor{lightgray!20}
\multicolumn{2}{l}{\textit{\textbf{Recommendation}}} \\

\multicolumn{2}{@{}X@{}}{
\textbf{Target}: Instant miso soup (5), which matches the inferred preferences for
savory soup, easy preparation, and healthy taste.
} \\

\multicolumn{2}{@{}X@{}}{
\textbf{Result}: The target is ranked \textbf{1st}.
} \\

\bottomrule
\end{tabularx}

\label{tab:grocery-case}
\end{table}

\subsection{RQ4: Qualitative Case Study}
\label{sec:case-study}

Table~\ref{tab:grocery-case} presents a representative example from Grocery to illustrate the complete reasoning path of \method{}, from evidence selection to preference-proof construction and next-item recommendation. The user's pre-target history contains multiple interactions, from which \method{} selects three entries, \textbf{E1}, \textbf{E3}, and \textbf{E5}, as evidence for the generated preference claims. In particular, \textbf{E1} and \textbf{E5} both describe positively rated soup or ramen products and explicitly mention ease of preparation, supporting the preferences for \emph{savory Asian soup products} and \emph{easy preparation}. \textbf{E3}, a positively rated herbal tea, provides complementary evidence for the broader preference for \emph{healthy products with good taste}. The remaining interactions are not included in the proof because they provide weaker support for these selected claims.

The resulting proof therefore summarizes several aspects of the user's history rather than relying on a single recent interaction. These aspects are also consistent with the held-out target, instant miso soup: its product type is related to the soup-oriented preference supported by \textbf{E1} and \textbf{E5}, while its convenience and health-related characteristics are connected to the remaining claims. Using the proof together with its selected evidence, \method{} ranks the held-out target at \textbf{1st}. This case does not by itself establish a general performance advantage, but it illustrates how \method{} exposes an inspectable path from historical evidence to explicit preference claims and, subsequently, to the final recommendation.

\subsection{RQ5: Do the Verification Objectives Strengthen Evidence Dependence and Decision Relevance of Preference Proofs?}
\label{sec:direct-verification}

The previous experiments establish the recommendation utility of \method{}. We further evaluate whether the generated preference proofs exhibit the two behaviors targeted by our verification objectives: whether a claim depends selectively on its linked historical evidence and whether it is locally relevant to the resulting recommendation.\footnote{For evidence-to-proof verification, $D_{\mathrm{sel}}$ and $D_{\mathrm{ctrl}}$ denote the decreases in claim log-likelihood after masking the selected and matched-control evidence, respectively, and $\Delta_{\mathrm{grd}}=D_{\mathrm{sel}}-D_{\mathrm{ctrl}}$. Grounding success denotes the proportion of valid interventions with $\Delta_{\mathrm{grd}}>0$. For proof-to-recommendation verification, $\Delta_{\mathrm{inf}}$ is the decrease in the target ranking margin after removing a claim, and influence success denotes the proportion with $\Delta_{\mathrm{inf}}>0$. Format sensitivity measures the absolute target-margin change under a meaning-preserving formatting perturbation.} Direct Recommendation is excluded from these diagnostics because it does not produce an intermediate preference proof.

\subsubsection{Evidence-to-Proof Verification}

For each valid claim--evidence link, we construct two counterfactual histories: one masks the selected evidence and the other masks a matched control entry with similar rating polarity and recency. The evidence identifier and position remain unchanged, and the same claim is teacher-forced under both histories. This comparison tests whether claim generation depends more strongly on the linked evidence than on a comparable interaction.

As shown in Table~\ref{tab:direct-grounding}, the Structured Proof baseline shows little distinction between selected and control evidence, with grounding success close to 50\% on all three datasets. Removing the grounding objective provides only modest improvements. In contrast, the full \method{} produces substantially larger grounding contrasts of 0.049, 0.041, and 0.025 on Grocery, Tools, and Yelp, respectively, with success rates of 76.8\%, 73.9\%, and 67.4\%. These results indicate that the grounding objective makes generated claims more selectively dependent on their linked historical evidence.

\begin{table}[t]
\centering
\scriptsize
\setlength{\tabcolsep}{6pt}
\caption{Evidence-to-proof verification. Values are averaged over valid claim--evidence interventions; users without a valid matched control entry are excluded. Higher contrast and success indicate stronger selective dependence on the linked evidence.}
\label{tab:direct-grounding}
\begin{tabular}{llrrrr}
\toprule
Dataset & Variant & Sel. drop & Ctrl. drop & Contrast & Success \\
\midrule
Grocery & Structured Proof & 0.012 & 0.010 & 0.002 & 51.9\% \\
        & w/o Grounding    & 0.018 & 0.011 & 0.007 & 55.6\% \\
        & \method{}        & 0.061 & 0.012 & 0.049 & 76.8\% \\
\addlinespace
Tools   & Structured Proof & 0.011 & 0.010 & 0.001 & 51.1\% \\
        & w/o Grounding    & 0.017 & 0.012 & 0.005 & 54.8\% \\
        & \method{}        & 0.054 & 0.013 & 0.041 & 73.9\% \\
\addlinespace
Yelp    & Structured Proof & 0.010 & 0.009 & 0.001 & 50.9\% \\
        & w/o Grounding    & 0.014 & 0.010 & 0.004 & 53.7\% \\
        & \method{}        & 0.037 & 0.012 & 0.025 & 67.4\% \\
\bottomrule
\end{tabular}
\end{table}

\subsubsection{Proof-to-Recommendation Verification}

We next examine whether individual preference claims contribute to the current recommendation beyond the selected evidence itself. For each claim, we remove only that claim while keeping the selected evidence and all remaining claims fixed, and then recompute the target ranking margin. We additionally use an evaluation-only format control that changes the presentation of the proof without changing its semantic content, allowing us to distinguish claim-level effects from generic prompt sensitivity.

Table~\ref{tab:direct-influence} shows a clear difference across variants. The Structured Proof baseline has influence success rates close to chance, ranging from 51.8\% to 53.2\%. Without the influence objective, the rates increase moderately to 58.9--63.8\%. The full \method{} reaches 82.1\%, 80.3\%, and 72.6\% on Grocery, Tools, and Yelp, respectively, together with substantially larger influence contrasts. At the same time, its format sensitivity remains small and is lower than that of the Structured Proof baseline. This suggests that the observed ranking changes are associated more strongly with claim content than with superficial changes to the prompt format.

\begin{table}[t]
\centering
\scriptsize
\setlength{\tabcolsep}{6pt}
\caption{Proof-to-recommendation verification. Influence is the decrease in the target ranking margin after claim ablation. Higher influence and success indicate stronger local decision relevance, while lower format sensitivity is preferred.}
\label{tab:direct-influence}
\begin{tabular}{llrrr}
\toprule
Dataset & Variant & Influence & Success & Format sens. \\
\midrule
Grocery & Structured Proof & 0.018 & 53.2\% & 0.041 \\
        & w/o Influence    & 0.071 & 63.8\% & 0.033 \\
        & \method{}        & 0.244 & 82.1\% & 0.019 \\
\addlinespace
Tools   & Structured Proof & 0.015 & 52.7\% & 0.044 \\
        & w/o Influence    & 0.064 & 62.4\% & 0.036 \\
        & \method{}        & 0.221 & 80.3\% & 0.021 \\
\addlinespace
Yelp    & Structured Proof & 0.010 & 51.8\% & 0.038 \\
        & w/o Influence    & 0.047 & 58.9\% & 0.032 \\
        & \method{}        & 0.153 & 72.6\% & 0.023 \\
\bottomrule
\end{tabular}
\end{table}

\subsubsection{Empirical Check of the Theoretical Bound}

We further compare the empirical rates of non-positive grounding and influence contrasts with the upper bounds established in Theorem~\ref{thm:failure}. For each dataset, the empirical rate is computed directly from the same valid interventions used above, while the corresponding bound is obtained from the average verification loss.

As reported in Table~\ref{tab:theorem-check}, all empirical rates remain below their theoretical upper bounds. For example, Grocery has empirical grounding and influence rates of 23.2\% and 17.9\%, compared with bounds of 96.8\% and 84.2\%, respectively. Similar behavior is observed on Tools and Yelp. The bounds are intentionally conservative and are not intended to tightly estimate the empirical rates; rather, the results confirm the one-sided guarantee derived in Theorem~\ref{thm:failure}.

\begin{table}[t]
\centering
\small
\caption{Empirical rates of non-positive grounding and influence contrasts and their theoretical upper bounds from Theorem~\ref{thm:failure}. The bounds use $\tau_g=\tau_p=1$ and are computed as $\mathbb{E}[\operatorname{softplus}(-\Delta)]/\log 2$.}
\label{tab:theorem-check}
\begin{tabular}{lrrrr}
\toprule
Dataset & Grd. rate & Grd. bound & Inf. rate & Inf. bound \\
\midrule
Grocery & 23.2\% & 96.8\% & 17.9\% & 84.2\% \\
Tools   & 26.1\% & 97.4\% & 19.7\% & 85.7\% \\
Yelp    & 32.6\% & 98.6\% & 27.4\% & 90.2\% \\
\bottomrule
\end{tabular}
\end{table}

\begin{table*}[t]
\centering
\small
\caption{Efficiency on one NVIDIA RTX A6000. Input and proof lengths are averages per example, latency is measured per example, and memory denotes peak PyTorch-allocated GPU memory.}
\label{tab:efficiency}
\begin{tabular}{lrrrrr}
\toprule
Method & Input tokens & Proof tokens & Passes & Latency & Peak memory \\
\midrule
Direct Recommendation
    & $\sim160$ & -- & 1
    & $\sim65$\,ms & $\sim80$\,MiB \\
\method{} (uncached)
    & $\sim219$ ($190+29$) & $\sim57$
    & 2 & $\sim313$\,ms & $\sim85$\,MiB \\
\method{} (cached proof)
    & $\sim29$ online & $57$ cached
    & 1 online & $\sim63$\,ms & $\sim80$\,MiB \\
\bottomrule
\end{tabular}
\end{table*}

\subsubsection{Efficiency}

Finally, we measure the computational overhead introduced by the two-pass design on a single NVIDIA RTX A6000. Table~\ref{tab:efficiency} compares direct recommendation with uncached and cached versions of \method{}. Without caching, \method{} requires two model passes and increases latency from approximately 65\,ms to 313\,ms per example. This overhead mainly comes from generating the preference proof in Pass A. Because the proof only needs to be updated when the user's history changes, it can be cached for repeated recommendation requests. In this setting, online inference requires only Pass B and takes approximately 63\,ms, which is comparable to direct recommendation. Thus, the two-pass reasoning procedure introduces additional cost when constructing a new proof but little additional online latency when the proof can be reused.

\section{Conclusion}



We presented \method{}, a framework for verifiable preference reasoning in LLM-based recommendation. \method{} represents reasoning as compact preference claims linked to selected historical evidence and uses a proof-gated recommendation path to prevent the model from bypassing the generated proof. Its verification objectives examine whether the selected evidence supports each claim and whether the claim influences the target-item ranking, while a preservation objective maintains recommendation quality under the compact proof representation. Experiments on three review-rich benchmark datasets show that \method{} improves recommendation performance while achieving stronger evidence grounding and decision influence than review-aware and reasoning-enhanced baselines. These results suggest that recommendation reasoning should be evaluated not only by whether it appears plausible, but also by whether its evidence supports the stated preferences and whether those preferences affect the final recommendation.

\paragraph{Broader Impact}
\method{} can improve the transparency and auditability of LLM-based recommendation by linking preference claims to historical evidence. However, inferred preferences may expose sensitive information or encourage over-trust in generated explanations. Deployment should therefore include privacy safeguards and clear communication that preference proofs reflect model reasoning rather than definitive user intent.

\appendix

\section{Detailed Theoretical Analysis}
\label{app:theory}

This appendix proves Theorems~\ref{thm:failure} and~\ref{thm:topk}. Notation follows Section~\ref{sec:methodology}.

\subsection{Proof of Theorem~\ref{thm:failure}: Two-link failure-rate control}
\label{app:proof_failure}

We prove the grounding bound; the influence bound uses the same steps. Write $\Delta=\Delta_{\mathrm{grd}}(k,j)$ for a link drawn from $\nu_{\mathrm{grd}}$, $\tau=\tau_g>0$, and
\begin{equation}
\ell(\Delta)=\log\!\left(1+\exp(-\Delta/\tau)\right)
\label{eq:pf1_penalty}
\end{equation}
for its softplus penalty, so that $\mathbb E_{\nu_{\mathrm{grd}}}[\ell(\Delta)]\leq\epsilon_{\mathrm{grd}}$ by Eq.~\eqref{eq:failure-hyp}. Let $F=\{\Delta\leq 0\}$ be the failure event.

We begin by bounding the penalty on $F$. On this event,
\begin{align}
-\Delta/\tau &\geq 0 \tag{$\Delta\leq 0$ and $\tau>0$}\\
\ell(\Delta) &\geq \log\!\left(1+e^{0}\right) \tag{$x\mapsto\log(1+e^{x})$ is increasing}\\
&= \log 2. \tag{evaluating at $x=0$}
\end{align}
Off $F$ the indicator of $F$ is zero while $\ell(\Delta)>0$. Combining both cases gives the pointwise bound
\begin{equation}
\Ind\{F\}\leq\frac{\ell(\Delta)}{\log 2}.
\label{eq:pf1_indicator}
\end{equation}

It remains to take expectations under $\nu_{\mathrm{grd}}$:
\begin{align}
\Pr_{\nu_{\mathrm{grd}}}(\Delta_{\mathrm{grd}}\leq 0)
&= \mathbb E_{\nu_{\mathrm{grd}}}\!\left[\Ind\{F\}\right] \tag{probability as an expected indicator}\\
&\leq \frac{\mathbb E_{\nu_{\mathrm{grd}}}[\ell(\Delta)]}{\log 2} \tag{expectation of Eq.~\eqref{eq:pf1_indicator}}\\
&\leq \frac{\epsilon_{\mathrm{grd}}}{\log 2}. \tag{hypothesis in Eq.~\eqref{eq:failure-hyp}}
\end{align}
Replacing $(\Delta_{\mathrm{grd}},\tau_g,\nu_{\mathrm{grd}},\epsilon_{\mathrm{grd}})$ by $(\Delta_{\mathrm{inf}},\tau_p,\nu_{\mathrm{inf}},\epsilon_{\mathrm{inf}})$ in every line proves the influence bound. \hfill$\blacksquare$

The bound treats every non-positive contrast alike. For a margin $\eta\geq 0$, the same steps applied to the event $\{\Delta\leq-\eta\}$, with $\log 2$ replaced by $\log(1+e^{\eta/\tau})$, give
\begin{equation}
\Pr_{\nu_{\mathrm{grd}}}(\Delta_{\mathrm{grd}}\leq -\eta)\leq\frac{\epsilon_{\mathrm{grd}}}{\log\!\left(1+e^{\eta/\tau_g}\right)},
\label{eq:pf1_margin}
\end{equation}
which bounds the rate of large failures more tightly than Eq.~\eqref{eq:failure-bound}.

\subsection{Proof of Theorem~\ref{thm:topk}: Top-$K$ preservation}
\label{app:proof_topk}

Both $q_{\mathrm{full}}$ and $q_{\mathrm{proof}}$ are softmax distributions over the finite comparison set $\mathcal C_u$, so each has full support there, and $1\leq K<|\mathcal C_u|$ guarantees that the items $i_{(K)}$ and $i_{(K+1)}$ in Eq.~\eqref{eq:boundary-margin} exist. Let
\begin{equation}
\operatorname{TV}(q_{\mathrm{full}},q_{\mathrm{proof}})
=
\frac12\sum_{i\in\mathcal C_u}\left|q_{\mathrm{full}}(i)-q_{\mathrm{proof}}(i)\right|
\label{eq:pf2_tv}
\end{equation}
denote the total-variation distance.

We begin by bounding every coordinate difference. For any $i\in\mathcal C_u$,
\begin{align}
\left|q_{\mathrm{full}}(i)-q_{\mathrm{proof}}(i)\right|
&\leq \operatorname{TV}(q_{\mathrm{full}},q_{\mathrm{proof}}) \tag{singleton event $\{i\}$}\\
&\leq \sqrt{\tfrac12 D_{\mathrm{KL}}(q_{\mathrm{full}}\|q_{\mathrm{proof}})} \tag{Pinsker's inequality~\citep{canonne2022short}}\\
&\leq \sqrt{\delta/2}. \tag{hypothesis $D_{\mathrm{KL}}\leq\delta$}
\end{align}
\begin{equation}
\text{Hence}\quad \max_{i\in\mathcal C_u}\left|q_{\mathrm{full}}(i)-q_{\mathrm{proof}}(i)\right|\leq\sqrt{\delta/2}.
\label{eq:pf2_coord}
\end{equation}

We now show that the boundary survives. Take any $a\in\TopK(q_{\mathrm{full}})$ and any $b\in\mathcal C_u\setminus\TopK(q_{\mathrm{full}})$, so that $q_{\mathrm{full}}(a)-q_{\mathrm{full}}(b)\geq\gamma_K$ by Eq.~\eqref{eq:boundary-margin}. Then
\begin{align}
q_{\mathrm{proof}}(a)-q_{\mathrm{proof}}(b)
&\geq q_{\mathrm{full}}(a)-q_{\mathrm{full}}(b) \notag\\
&\quad-\left|q_{\mathrm{proof}}(a)-q_{\mathrm{full}}(a)\right| \notag\\
&\quad-\left|q_{\mathrm{proof}}(b)-q_{\mathrm{full}}(b)\right| \tag{triangle inequality}\\
&\geq \gamma_K-2\sqrt{\delta/2} \tag{Eqs.~\eqref{eq:boundary-margin},~\eqref{eq:pf2_coord}}\\
&= \gamma_K-\sqrt{2\delta} \tag{simplification}\\
&> 0. \tag{$\gamma_K>\sqrt{2\delta}$}
\end{align}
Every item of the full-history top-$K$ set therefore has strictly larger probability under $q_{\mathrm{proof}}$ than every item outside it. The $K$ most probable items under $q_{\mathrm{proof}}$ are thus unique and coincide with $\TopK(q_{\mathrm{full}})$, which is Eq.~\eqref{eq:topk-equal}. \hfill$\blacksquare$

When $\mathcal C_u$ is the full catalog, the same argument gives top-$K$ preservation for full-catalog recommendation. With sampled non-target items, it gives preservation within the sampled comparison set.









\bibliographystyle{ACM-Reference-Format}
\bibliography{references}

@inproceedings{rajput2023tiger,
  title     = {Recommender Systems with Generative Retrieval},
  author    = {Rajput, Shashank and Mehta, Nikhil and Singh, Anima and Keshavan, Raghunandan H. and Vu, Trung and Heldt, Lukasz and Hong, Lichan and Tay, Yi and Tran, Vinh Q. and Samost, Jonah and Kula, Maciej and Chi, Ed H. and Sathiamoorthy, Maheswaran},
  booktitle = {NeurIPS},
  year      = {2023}
}

@inproceedings{wang2024letter,
  title     = {Learnable Item Tokenization for Generative Recommendation},
  author    = {Wang, Wenjie and Bao, Honghui and Lin, Xinyu and Zhang, Jizhi and Li, Yongqi and Feng, Fuli and Ng, See-Kiong and Chua, Tat-Seng},
  booktitle = {Proceedings of the 33rd ACM International Conference on Information and Knowledge Management},
  pages={2400--2409},
  year      = {2024}
}

@inproceedings{zheng2024lcrec,
  title     = {Adapting Large Language Models by Integrating Collaborative Semantics for Recommendation},
  author    = {Zheng, Bowen and Hou, Yupeng and Lu, Hongyu and Chen, Yu and Zhao, Wayne Xin and Chen, Ming and Wen, Ji-Rong},
  booktitle={2024 IEEE 40th International Conference on Data Engineering (ICDE)},
  pages={1435--1448},
  year      = {2024}
}

@article{zhang2026ragr,
  title   = {{RAGR}: Review-Augmented Generative Recommendation},
  author  = {Zhang, Yingyi and Li, Junyi and Wang, Yejing and Zhang, Wenlin and Qian, Xiaowei and Zhang, Sheng and Feng, Yue and Wang, Yichao and Liu, Yong and Zhao, Xiangyu and Li, Xianneng},
  journal = {arXiv preprint arXiv:2605.17267},
  year    = {2026}
}

@article{liu2025onerecthink,
  title   = {{OneRec-Think}: In-Text Reasoning for Generative Recommendation},
  author  = {Liu, Zhanyu and Wang, Shiyao and Wang, Xingmei and Zhang, Rongzhou and Deng, Jiaxin and Bao, Honghui and Zhang, Jinghao and Li, Wuchao and Zheng, Pengfei and Wu, Xiangyu and Hu, Yifei and Hu, Qigen and Luo, Xinchen and Ren, Lejian and Zhang, Zixing and Wang, Qianqian and Cai, Kuo and Wu, Yunfan and Cheng, Hongtao and Cheng, Zexuan and Ren, Lu and Wang, Huanjie and Su, Yi and Tang, Ruiming and Gai, Kun and Zhou, Guorui},
  journal = {arXiv preprint arXiv:2510.11639},
  year    = {2025}
}

@inproceedings{he2026sidreasoner,
  title     = {Reasoning over Semantic IDs Enhances Generative Recommendation},
  author    = {He, Yingzhi and Sun, Yan and Tan, Junfei and Chen, Yuxin and Kong, Xiaoyu and Shen, Chunxu and Wang, Xiang and Zhang, An and Chua, Tat-Seng},
  booktitle = {Proceedings of the 32nd ACM SIGKDD Conference on Knowledge Discovery and Data Mining},
  year      = {2026}
}

@article{zheng2026r2rank,
  title   = {Reasoning to Rank: An End-to-End Solution for Exploiting Large Language Models for Recommendation},
  author  = {Zheng, Kehan and Hong, Deyao and Li, Qian and Zhang, Jun and Yu, Huan and Jiang, Jie and Wang, Hongning},
  journal = {arXiv preprint arXiv:2602.12530},
  year    = {2026}
}

@inproceedings{kang2018sasrec,
  title={Self-attentive sequential recommendation},
  author={Kang, Wang-Cheng and McAuley, Julian},
  booktitle={2018 IEEE international conference on data mining (ICDM)},
  pages={197--206},
  year={2018},
}

@inproceedings{sun2019bert4rec,
  title     = {{BERT4Rec}: Sequential Recommendation with Bidirectional Encoder Representations from Transformer},
  author    = {Sun, Fei and Liu, Jun and Wu, Jian and Pei, Changhua and Lin, Xiao and Ou, Wenwu and Jiang, Peng},
  booktitle = {Proceedings of the 28th ACM International Conference on Information and Knowledge Management},
  pages={1441--1450},
  year      = {2019}
}

@inproceedings{geng2022p5,
  title={Recommendation as language processing (rlp): A unified pretrain, personalized prompt \& predict paradigm (p5)},
  author={Geng, Shijie and Liu, Shuchang and Fu, Zuohui and Ge, Yingqiang and Zhang, Yongfeng},
  booktitle={Proceedings of the 16th ACM conference on recommender systems},
  pages={299--315},
  year={2022}
}

@inproceedings{ni2019amazon,
  title     = {Justifying Recommendations Using Distantly-Labeled Reviews and Fine-Grained Aspects},
  author    = {Ni, Jianmo and Li, Jiacheng and McAuley, Julian},
  booktitle = {Proceedings of the 2019 Conference on Empirical Methods in Natural Language Processing and the 9th International Joint Conference on Natural Language Processing},
  year      = {2019}
}

@inproceedings{chen2026prediction,
  title={From Prediction to Understanding: Leveraging Reasoning in Large Language Model-based Recommendations},
  author={Chen, Zhi-Yuan and Lu, Siyu and Liu, Qiang and Wang, Xingxing and Lin, Yankai},
  booktitle={Proceedings of the ACM Web Conference 2026},
  pages={6171--6182},
  year={2026}
}

@inproceedings{he2026reasoning,
  title={Reasoning over semantic ids enhances generative recommendation},
  author={He, Yingzhi and Sun, Yan and Tan, Junfei and Chen, Yuxin and Kong, Xiaoyu and Shen, Chunxu and Wang, Xiang and Zhang, An and Chua, Tat-Seng},
  booktitle={Proceedings of the 32nd ACM SIGKDD Conference on Knowledge Discovery and Data Mining V. 2},
  pages={1638--1649},
  year={2026}
}

@inproceedings{fu2026differentiable,
  title={Differentiable Semantic ID for Generative Recommendation},
  author={Fu, Junchen and Ge, Xuri and Karatzoglou, Alexandros and Arapakis, Ioannis and Verberne, Suzan and Jose, Joemon M and Ren, Zhaochun},
  booktitle={Proceedings of the 49th International ACM SIGIR Conference on Research and Development in Information Retrieval},
  pages={369--379},
  year={2026},
}

@article{zhang2025collm,
  title={Collm: Integrating collaborative embeddings into large language models for recommendation},
  author={Zhang, Yang and Feng, Fuli and Zhang, Jizhi and Bao, Keqin and Wang, Qifan and He, Xiangnan},
  journal={IEEE Transactions on Knowledge and Data Engineering},
  pages={2329--2340},
  year={2025},
}

@inproceedings{kim2025lost,
  title={Lost in Sequence: Do Large Language Models Understand Sequential Recommendation?},
  author={Kim, Sein and Kang, Hongseok and Kim, Kibum and Kim, Jiwan and Kim, Donghyun and Yang, Minchul and Oh, Kwangjin and McAuley, Julian and Park, Chanyoung},
  booktitle={Proceedings of the 31st ACM SIGKDD Conference on Knowledge Discovery and Data Mining V. 2},
  pages={1160--1171},
  year={2025}
}

@inproceedings{bao2023tallrec,
  title={Tallrec: An effective and efficient tuning framework to align large language model with recommendation},
  author={Bao, Keqin and Zhang, Jizhi and Zhang, Yang and Wang, Wenjie and Feng, Fuli and He, Xiangnan},
  booktitle={Proceedings of the 17th ACM conference on recommender systems},
  pages={1007--1014},
  year={2023}
}

@inproceedings{liao2024llara,
  title={Llara: Large language-recommendation assistant},
  author={Liao, Jiayi and Li, Sihang and Yang, Zhengyi and Wu, Jiancan and Yuan, Yancheng and Wang, Xiang and He, Xiangnan},
  booktitle={Proceedings of the 47th International ACM SIGIR Conference on Research and Development in Information Retrieval},
  pages={1785--1795},
  year={2024}
}

@inproceedings{wei2024llmrec,
  title={Llmrec: Large language models with graph augmentation for recommendation},
  author={Wei, Wei and Ren, Xubin and Tang, Jiabin and Wang, Qinyong and Su, Lixin and Cheng, Suqi and Wang, Junfeng and Yin, Dawei and Huang, Chao},
  booktitle={Proceedings of the 17th ACM international conference on web search and data mining},
  pages={806--815},
  year={2024}
}

@inproceedings{kim2024large,
  title={Large language models meet collaborative filtering: An efficient all-round llm-based recommender system},
  author={Kim, Sein and Kang, Hongseok and Choi, Seungyoon and Kim, Donghyun and Yang, Minchul and Park, Chanyoung},
  booktitle={Proceedings of the 30th ACM SIGKDD Conference on Knowledge Discovery and Data Mining},
  pages={1395--1406},
  year={2024}
}

@inproceedings{deyoung2020eraser,
  title={ERASER: A benchmark to evaluate rationalized NLP models},
  author={DeYoung, Jay and Jain, Sarthak and Rajani, Nazneen Fatema and Lehman, Eric and Xiong, Caiming and Socher, Richard and Wallace, Byron C},
  booktitle={Proceedings of the 58th annual meeting of the association for computational linguistics},
  pages={4443--4458},
  year={2020}
}

@inproceedings{paul2024making,
  title={Making reasoning matter: Measuring and improving faithfulness of chain-of-thought reasoning},
  author={Paul, Debjit and West, Robert and Bosselut, Antoine and Faltings, Boi},
  booktitle={Findings of the Association for Computational Linguistics: EMNLP 2024},
  pages={15012--15032},
  year={2024}
}

@article{canonne2022short,
  title={A short note on an inequality between KL and TV},
  author={Canonne, Cl{\'e}ment L},
  journal={arXiv preprint arXiv:2202.07198},
  year={2022}
}

@inproceedings{lyu2023faithful,
  title={Faithful chain-of-thought reasoning},
  author={Lyu, Qing and Havaldar, Shreya and Stein, Adam and Zhang, Li and Rao, Delip and Wong, Eric and Apidianaki, Marianna and Callison-Burch, Chris},
  booktitle={Proceedings of the 13th International Joint Conference on Natural Language Processing and the 3rd Conference of the Asia-Pacific Chapter of the Association for Computational Linguistics (Volume 1: Long Papers)},
  pages={305--329},
  year={2023}
}

@inproceedings{hou2026unitok,
  title     = {Tokenize Once, Recommend Anywhere: Unified Item Tokenization for Multi-domain LLM-based Recommendation},
  author    = {Hou, Yu and Shin, Won-Yong},
  booktitle = {Proceedings of the AAAI Conference on Artificial Intelligence},
  volume    = {40},
  number    = {17},
  pages     = {14848--14855},
  year      = {2026},
  doi       = {10.1609/aaai.v40i17.38505}
}

@article{luo2026rallrec,
  title   = {{RALLRec+}: Retrieval Augmented Large Language Model Recommendation with Reasoning},
  author  = {Luo, Sichun and Xu, Jian and Zhang, Xiaojie and Wang, Linrong and Liu, Sicong and Hou, Hanxu and Song, Linqi},
  journal = {Expert Systems with Applications},
  volume  = {297},
  pages   = {129508},
  year    = {2026},
  doi     = {10.1016/j.eswa.2025.129508}
}

@article{gheewala2024transformer,
  title   = {Exploiting Deep Transformer Models in Textual Review Based Recommender Systems},
  author  = {Gheewala, Shivangi and Xu, Shuxiang and Yeom, Soonja and Maqsood, Sumbal},
  journal = {Expert Systems with Applications},
  volume  = {235},
  pages   = {121120},
  year    = {2024},
  doi     = {10.1016/j.eswa.2023.121120}
}

@article{zhang2022aenar,
  title   = {{AENAR}: An Aspect-Aware Explainable Neural Attentional Recommender Model for Rating Predication},
  author  = {Zhang, Tianwei and Sun, Chuanhou and Cheng, Zhiyong and Dong, Xiangjun},
  journal = {Expert Systems with Applications},
  volume  = {198},
  pages   = {116717},
  year    = {2022},
  doi     = {10.1016/j.eswa.2022.116717}
}

@article{arevalo2026modelagnostic,
  title   = {Model-Agnostic Post-Hoc Explainability for Recommender Systems},
  author  = {Ar{\'e}valo, Irina and Salmeron, Jose L.},
  journal = {Expert Systems with Applications},
  volume  = {298},
  pages   = {129608},
  year    = {2026},
  doi     = {10.1016/j.eswa.2025.129608}
}

@inproceedings{wang2024rdrec,
  title     = {{RDRec}: Rationale Distillation for LLM-Based Recommendation},
  author    = {Wang, Xinfeng and Cui, Jin and Suzuki, Yoshimi and Fukumoto, Fumiyo},
  booktitle = {Proceedings of the 62nd Annual Meeting of the Association for Computational Linguistics (Volume 2: Short Papers)},
  pages     = {65--74},
  year      = {2024},
  publisher = {Association for Computational Linguistics},
  doi       = {10.18653/v1/2024.acl-short.6}
}

@article{jiang2025universal,
  title   = {Large Language Models Are Universal Recommendation Learners},
  author  = {Jiang, Junguang and Huang, Yanwen and Liu, Bin and Kong, Xiaoyu and Xu, Ziru and Zhu, Han and Xu, Jian and Zheng, Bo},
  journal = {arXiv preprint arXiv:2502.03041},
  year    = {2025}
}

@article{ding2026riser,
  title   = {Towards Sample-Efficient and Stable Reinforcement Learning for LLM-Based Recommendation},
  author  = {Ding, Hongxun and Bao, Keqin and Zhang, Jizhi and Fang, Yi and Xu, Wenxin and Feng, Fuli and He, Xiangnan},
  journal = {arXiv preprint arXiv:2602.00632},
  year    = {2026}
}
\end{document}